\def\correction#1{{\color{blue}{#1}}}
\def\A{{\cal A}}
\def\E{\mathbb{E}}
\def\H{{\cal H}}
\def\M{{\cal M}}
\def\O{{O}}
\def\Q{{\cal Q}}
\def\reals{\mathbb{R}}
\def\R{{\cal R}}
\def\S{{\cal S}}
\def\T{{\cal T}}
\def\V{{\cal V}}
\def\X{{\cal X}}
\DeclareMathOperator{\conv}{conv}
\def\suchthat{\mid} 
\def\Easp{{\cal E}} 
\DeclareMathOperator{\KLdiv}{\mathrm{KLdiv}}
\DeclareMathOperator{\Var}{\mathrm{Var}}
\begin{document}
\title{Non-maximizing 
policies that fulfill multi-criterion aspirations in expectation
}
\titlerunning{Policies that fulfill multi-criterion aspirations in expectation}
%
\author{
Simon Dima\inst{1}\orcidID{0009-0003-7815-8238} \and
Simon Fischer\inst{2}\orcidID{0009-0000-7261-3031} \and
Jobst Heitzig\inst{3}\orcidID{0000-0002-0442-8077} \and
Joss Oliver\inst{4}\orcidID{0009-0008-6333-7598}
}
%
%
\institute{
École Normale Supérieure, Paris, France, \email{simon.dima@ens.psl.eu} \and 
Independent Researcher, Cologne, Germany
\and
PIK, Potsdam, Germany, \email{heitzig@pik-potsdam.de}, Corr. Author \and
Independent Researcher, London, UK, \email{joss.oliver62@gmail.com}\\[2mm]
\correction{Slightly corrected version (changes in blue), 25 February 2025}
}
\maketitle              
\begin{abstract} 
In dynamic programming and reinforcement learning,
the policy for the sequential decision making of an agent in a stochastic environment is usually determined by expressing the goal as a scalar reward function and seeking a policy that maximizes the expected total reward.
However, many goals that humans care about naturally concern multiple aspects of the world, and it may not be obvious how to condense those into a single reward function.
Furthermore, maximization suffers from specification gaming,
where the obtained policy achieves a high expected total reward in an unintended way,
often taking extreme or nonsensical actions.

Here we consider finite acyclic Markov Decision Processes with multiple distinct evaluation metrics, which do not necessarily represent quantities that the user wants to be maximized.
We assume the task of the agent is to ensure that the vector of expected totals of the evaluation metrics falls into some given convex set,
called the aspiration set.
Our algorithm guarantees that this task is fulfilled
by using simplices to approximate feasibility sets and propagate aspirations forward while ensuring they remain feasible.
It has complexity linear in the number of possible state--action--successor triples and polynomial in the number of evaluation metrics.
Moreover, the explicitly non-maximizing nature of the chosen policy and goals yields additional degrees of freedom,
which can be used to apply heuristic safety criteria to the choice of actions.
We discuss several such safety criteria
that aim to steer the agent towards more conservative behavior.
\keywords{%
multi-objective decision-making %
\and planning %
\and Markov decision processes %
\and AI safety %
\and satisficing 
\and convex geometry
}
\end{abstract}
%
%
%
\section{Introduction}
In typical reinforcement learning (RL) and dynamic programming problems an agent is trained or programmed to solve tasks encoded by a single real-valued reward function that it shall maximize. However, many tasks are not easily expressed by such a function \cite{subramani2023expressivity}, human preferences are hard to learn and may not be easy to aggregate across stakeholders \cite{conitzer2024social}, and maximizing a misspecified objective may fall prey to reward hacking \cite{amodei2016concrete} and Goodhart's law \cite{skalse2023invariance}, leading to unintended side-effects and potentially harmful consequences.

In this work, we study a particular \textit{aspiration-based} approach to agent design. We assume an existing task-specific world model in the form of a fully observed Markov Decision Process (MDP), where the task is not encoded by a reward function but instead a \textit{multi-criterion evaluation function} and a bounded, convex subset of its range, called an \textit{aspiration set}, that can be thought of as an ``instruction'' \cite{clymer2023generalization} to the agent.
Aspiration-type goals can also naturally arise from subtasks in complex environments even if the overall goal is to maximize some objective, when the complexity requires a hierarchical decision-making approach whose highest level selects subtasks that turn into aspiration sets for lower hierarchal levels.

In our version of aspiration-based agents, the goal is to make the \emph{expected value} of the total with respect to this evaluation function fall within the aspiration set, and select from this set according to certain performance and safety criteria. 
They do so step-wise, exploiting recursion equations similar to the Bellman equation.
Thus our approach is like multi-objective reinforcement learning (MORL), with a primary aspiration-based objective and at least one secondary objective incorporated via action-selection criteria \cite{VAMPLEW2021104186}. Unlike MORL, the components of the evaluation function (called \textit{evaluation metrics}) are not objectives in the sense of targets for maximization. Rather, an aspiration formulated w.r.t.\ several evaluation metrics might correspond to a single objective (e.g., ``make a cup of tea''). Also, at no point does an aspiration-based agent aggregate the evaluation metrics into a single value. Instead, any trade-offs are built into the aspiration set itself, similar to what \cite{dalrymple2024towards} call a ``safety specification''. For example, aspiring to buy a total of 10 oranges and/or apples for at most EUR 1 per item could be encoded with the aspiration set $\{(o, a, c) \::\: o,a \geq 0; c\le o + a = 10\}$.

A similar set-up to ours has been used in \cite{Miryoosefi2019} which has used reinforcement learning to find a policy whose expected discounted reward vector lies inside a convex set. Instead of a reinforcement \emph{learning} perspective, we use a model-based \emph{planning} perspective and design an algorithm that explicitly calculates a policy for solving the task, based on a model of the environment.
\footnote{Nevertheless, our algorithm can also be straightforwardly adapted to learning.}
Also, the approach in \cite{Miryoosefi2019} is concerned with guaranteed bounds for the distance between received rewards and the convex constraints in terms of the number of iterations, whereas we focus on guaranteeing aspiration satisfaction in a fixed number of computational steps, providing a verifiable guarantee in the sense of \cite{dalrymple2024towards}.

Other agent designs that follow a non-maximization-goal-based approach include quantilizers, decision transformers and active inference. Quantilizers are agents that use random actions in the top $n\%$ of a ``base distribution'' over actions, sorted by expected return \cite{taylor2015quantilizers}. The goal for decision transformers is to make the expected return equal a particular value $R_{target}$ \cite{chen2021decision}. The goal for active inference agents is to produce a particular probability distribution of observations \cite{tschantz2020reinforcement}. While the goal space for quantilizers and decision transformers, being based on a single real-valued function,
is often too restricted for many applications, that of active inference agents (all probability distributions) appears too wide for the formal study of many aspects of aspiration-based decision-making. Our approach is of intermediary complexity.

An important consideration in this work, to ensure tractability in large environments, is also the \emph{computational} complexity in the number of actions, states, and evaluation metrics. 
We will see that for our algorithm, the preparation of an episode has linear complexity in the number of possible state--action--successor transitions and (conjectured and numerically confirmed) linear average complexity in the number of evaluation metrics, and then the additional per-time-step complexity of the policy is linear in the number of actions, constant in the number of states, and polynomial in the number of evaluation metrics.

Our work also affects the emerging AI safety/alignment field, which views unintended consequences from maximization, e.g., reward hacking and Goodhart's law, as a major source of risk once agentic AI systems become very capable \cite{amodei2016concrete}.

\section{Preliminaries}

\paragraph{Environment.}
An \emph{environment} $E=(\S,s_0,\S_\top,\A,\T)$ is a finite Markov Decision Process without a reward function,
consisting of
a finite \emph{state space} $\S$, 
an \emph{initial state} $s_0\in\S$, 
a nonempty subset $\S_\top\subseteq\S$ of \emph{terminal states}, 
a nonempty finite \emph{action space} $\A$, 
and a function $\T:(\S\setminus\S_\top)\times\A\to\Delta(\S\setminus\{s_0\})$
specifying \emph{transition probabilities}:
$\T(s, a)(s')$ is the probability that taking action $a$ from state $s$ leads to state $s'$.
We assume that the environment is acyclic, i.e.,
that it is impossible to reach a given state again after leaving it.
We fix some environment $E$ and write $s'\sim s,a$ to denote that $s'$ is distributed according to $\T(s,a)$.


\paragraph{Policy.}
A \emph{(memory-based) policy} is given by
some nonempty finite set $\M$ of memory states internal to the agent,
an initial memory state $m_0 \in \M$ and
a function $\pi:\M\times(\S\setminus\S_\top)\to\Delta(\A\times\M)$
that maps each possible combination of memory state $m\in\M$
and (environment) state $s\in\S\setminus\S_\top$
to a probability distribution over combinations of actions $a\in\A$ and successor memory states $m'\in\M$.
Let $\Pi_\M$ be the set of all policies with memory space $\M$.
The special class of \emph{Markovian} or memoryless policies is obtained when $\M$ is a singleton.
Policies which are both Markovian and deterministic are called
\emph{pure Markov policies}, and amount to a function $(\S\setminus\S_\top) \rightarrow \A$.
We denote by $\Pi^0$ the set of Markovian policies
and by $\Pi^p$ the set of all pure Markov policies.
%

\paragraph{Evaluation, Delta, Total.}
A \emph{(multi-criterion) evaluation function} for the environment is a function $f:(\S\setminus\S_\top)\times\A\times(\S\setminus\{s_0\})\to\reals^d$ where $d\ge 1$. 
The quantity $f(s,a,s')$ is called the \emph{Delta} received under transition $(s,a)\to s'$. It represents by how much certain evaluation metrics change when the agent takes action $a$ in state $s$ and the successor state is $s'$. 
Let us fix $f$ for the rest of the paper.
The \emph{(received) Total} of a trajectory $h = (m_0, s_0, a_1, m_1, s_1,\dots,a_T, m_T, s_T)$ is then the cumulative Delta received along the trajectory,
\begin{align}
    \tau(h) &= \textstyle\sum_{t=1}^T f(s_{t-1},a_t,s_t). \label{tauhT}
\end{align}

\paragraph{Value functions.}
Given a policy $\pi$ and an evaluation function $f$,
the state value function $V^\pi:\M\times\S\to\reals^d$
is defined as the expected Total accumulated in future steps
while following policy $\pi$.
In particular, $V^\pi(m_0,s_0)$ is the expected Total of the whole trajectory:
$V^\pi(m_0,s_0) = \E(\tau)$.
Likewise, we define the action value function
$Q^\pi:\S\times\A\times\M\to\reals^d$.
These satisfy the Bellman equations
\begin{align}
    V^\pi(m,s) &= \E_{a,m'\sim\pi(m,s)} \left(Q^\pi(s, a, m')\right),
    \label{Vpibellman}
    \\
    Q^\pi(s, a, m) &= \E_{s' \sim s, a} \left(
    f(s,a,s') + V^\pi(m,s') \right).\label{Qpibellman}
\end{align}
They can be calculated by backwards induction since the environment is finite and acyclic, with the base case $V^\pi(m, s) = 0$ for terminal states $s \in \S_\top$.
For memoryless policies, we will elide the argument $m$ since $\M$ is a singleton.

The Delta and Total are analogous to reward and return in an MDP with a reward function,
and the value functions $V$ and $Q$ are defined as usual,
albeit with vector instead of scalar arithmetic.
However, while we do assume that the evaluation metrics represent some aspects relevant to the task at hand,
we do \emph{not} assume that they represent a form of utility
which it is always desirable to increase.
Accordingly, the agent's goal will be specified by the user not as a maximization task,
but rather as a set of linear constraints on the expected sums of the evaluation metrics, which we call an \emph{aspiration}.

\section{Fulfilling aspirations}
\FloatBarrier
\paragraph{Aspirations, feasibility.}
An \emph{(initial) aspiration} is a convex polytope $\Easp_0\subset\reals^d$,
representing the values of the expected total $\E\tau$ which are considered acceptable.
We say that a policy $\pi$ \emph{fulfills} the aspiration when it satisfies $V^\pi(m_0, s_0) \in \Easp_0$.
To answer the question of whether it is possible to fulfill a given aspiration,
we introduce \emph{feasibility sets}.
The \emph{state-feasibility set} of $s \in S$ is the set of possible values for the expected future Total from $s$, under any memory-based policy:
$\V(s) = \{V^\pi(m, s) \mid \M ~\text{finite set}; m_0,m \in \M; \pi \in \Pi_\M\}$;
likewise, we define the \emph{action-feasibility set} $\Q(s, a)$.
It is straightforward to verify that $\V(s\in\S_\top) = \{0\}$,
and that the following recursive equations hold for $s \notin \S_\top$:
\begin{align}
    \Q(s,a) &= \textstyle\E_{s'\sim s,a}\big( f(s,a,s') + \V(s') \big)
    = \sum_{s'} \T(s, a)(s'){\cdot}(f(s, a, s') + \V(s')),
    \label{Qsarec}\\
    \V(s) &= \textstyle\bigcup_{p \in \Delta(\A)} \E_{a \sim p}\Q(s, a)
    = \bigcup_{p \in \Delta(\A)} \sum_{a\in\A} p(a)\Q(s, a)
    .\label{Vsrec}
\end{align}
In this, we use set arithmetic: $r\X+r'\X' = \{rx+r'x' \suchthat x\in\X,x'\in \X'\}$
for $r,r'\in\reals$ and $\X,\X'\subset\reals^d$.
It is clear that feasibility sets are convex polytopes.

\paragraph{Aspiration propagation.}
Algorithm scheme \ref{alg:generalaspset} shows a general manner of fulfilling a feasible initial aspiration,
starting from a given state or state-action pair.
\begin{algorithm*}\footnotesize
\caption{General scheme for fulfilling feasible aspiration sets}
\label{alg:generalaspset}
    \begin{algorithmic}[1]
        \Procedure{FulfillStateAspiration}{$s\in\S\setminus\S_\top$, nonempty $\Easp \subseteq \V(s)$}
        \State Find suitable $\Easp_a \subseteq \Q(s, a)$ for all $a\in\A$, and $p \in \Delta(\A)$, s.t.\ $\E_{a \sim p}(\Easp_a) \subseteq \Easp$. \label{line:actionpropset}
        \State Draw action $a$ from distribution $p$ and do \Call{FulfillActionAspiration}{$s, a, \Easp_a$}.
        \EndProcedure

        \Procedure{FulfillActionAspiration}{$s\in\S, a\in\A$, nonempty $\Easp_a\subseteq\Q(s,a)$}
        \State Find suitable $\Easp_{s'} \subseteq \V(s')$ for all $s' \in \S$ s.t.\
        $\E_{s' \sim s, a}(f(s, a, s') + \Easp_{s'}) \subseteq \Easp_a$. \label{line:statepropset}
        \State Execute action $a$ and observe successor state $s'$.
        \State If $s'$ is terminal, stop; else do \Call{FulfillStateAspiration}{$s', \Easp_{s'}$}.
        \EndProcedure
    \end{algorithmic}
\end{algorithm*}
It memorizes and updates aspirations $\Easp$ and $\Easp_a$,
initially equalling $\Easp_0$.
The agent alternates between being in a certain state $s$
with a \emph{state-aspiration} $\Easp \subseteq \V(s)$,
and being in a state $s$ and having chosen, but not yet performed,
an action $a$,
with an \emph{action-aspiration} $\Easp_a \subseteq \Q(s,a$).
Although this algorithm is written as two mutually recursive functions,
it can be formally implemented by a memory-based policy that memorizes the current aspiration $\Easp$ or $\Easp_a$.

The way the aspiration set $\Easp$ is \emph{propagated} between steps
is the key part. 
The two directions of aspiration propagation are slightly different:
in state-aspiration to action-aspiration propagation,
shown on line \ref{line:actionpropset},
the agent may choose the probability distribution ($p$) over actions,
whereas in action-aspiration to state-aspiration propagation,
shown on line \ref{line:statepropset},
the next state is determined by the environment with fixed probabilities ($\T$).

The correctness of algorithm \ref{alg:generalaspset} follows from the requirements of
lines \ref{line:actionpropset} and \ref{line:statepropset};
that these are possible to fulfill is a consequence of equations \eqref{Vsrec} and \eqref{Qsarec}.
Feasibility of aspirations is maintained as an invariant.


To implement this scheme, we have to specify how to perform aspiration propagation.
The procedure used to select action-aspirations $\Easp_a$ and state-aspirations $\Easp_{s'}$
should preferably allow some control over how the size of these sets changes over time.
On one hand, preventing aspiration sets from shrinking too fast
preserves a wider range of acceptable behaviors in later steps%
\footnote{
However, algorithm \ref{alg:generalaspset} remains correct even if the aspiration sets shrink to singletons.
},
but on the other hand, keeping the aspiration sets
somewhat smaller than the feasibility sets
also provides immediate freedom in the choice of the next action,
as detailed in section \ref{sec:whyshrink}.
An additional challenge is posed by the complex shape of the feasibility sets,
which we must handle in a tractable way.

\paragraph{Approximating feasibility sets by simplices.}
\label{sec:simplices}
Let $\conv(X)$ denote the convex hull of any set $X \subseteq \reals^d$.
Given any tuple $\R$ of $d+1$ memoryless \emph{reference policies}
$\pi_1,\dots,\pi_{d+1}\in\Pi^0$,
we define \emph{reference simplices} in evaluation-space as
$\V^\R(s) = \conv\{ V^\pi(s) \suchthat \pi\in\R \}$
and $\Q^\R(s,a) = \conv\{ Q^\pi(s,a) \suchthat \pi\in\R \}$.
It is immediate that these are subsets of the convex feasibility sets
$\V(s)$ resp. $\Q(s,a)$,
and that 
\begin{align}
    \V^\R(s) &\subseteq \textstyle\bigcup_{p \in \Delta(\A)} \E_{a\sim p}\left(\Q^\R(s, a)\right),
    \\
    \Q^\R(s,a) &\subseteq \E_{s'\sim s,a}\big( f(s,a,s') + \V^\R(s') \big).
\end{align}
These imply that we can replace every occurrence of $\V$ and $\Q$ in algorithm \ref{alg:generalaspset}
with $\V^\R$ resp. $\Q^\R$,
obtaining a correct algorithm to guarantee fulfillment of any initial aspiration
$\Easp_0$ provided it intersects the reference simplex $\V^\R(s_0)$.

It turns out that the latter can be guaranteed by a proper choice of reference policies, and that we can always use pure Markov policies for this:
\begin{lemma}
For any state $s$, we have
$\V(s) = \conv \{V^\pi(s) \suchthat \pi \in \Pi^p\}$.
\end{lemma}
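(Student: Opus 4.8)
The plan is to prove the two inclusions separately, with the nontrivial work concentrated in showing $\V(s) \subseteq \conv\{V^\pi(s) \suchthat \pi \in \Pi^p\}$. The inclusion $\supseteq$ is immediate: each pure Markov policy is in particular a memory-based policy, so $V^\pi(s) \in \V(s)$ for $\pi \in \Pi^p$, and since $\V(s)$ is convex (it is a convex polytope, as noted above) it contains the convex hull of these points. For the reverse inclusion I would argue via support functions. Both $\V(s)$ and $W(s) := \conv\{V^\pi(s) \suchthat \pi \in \Pi^p\}$ are compact convex subsets of $\reals^d$ — the former because it is a polytope, the latter because $\S$ and $\A$ are finite, so $\Pi^p$ is finite and $W(s)$ is the convex hull of finitely many points. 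Two compact convex sets coincide as soon as their support functions agree in every direction, so it suffices to show $\sup_{v \in \V(s)}\langle c, v\rangle = \sup_{v \in W(s)}\langle c, v\rangle$ for all $c \in \reals^d$.

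The key device is scalarization. Fixing a direction $c \in \reals^d$, I would define the scalar evaluation function $f_c := \langle c, f\rangle$. Linearity of the Total and of expectation gives $\langle c, V^\pi(s)\rangle = V^\pi_c(s)$, where $V^\pi_c$ is the ordinary scalar-valued state value function of $\pi$ for the reward $f_c$. Hence the support function of $\V(s)$ in direction $c$ is exactly $\sup_\pi V^\pi_c(s)$, the optimal expected scalar total from $s$ over all memory-based policies, while that of $W(s)$ is $\max_{\pi \in \Pi^p} V^\pi_c(s)$. The whole statement therefore reduces to the claim that in the scalar MDP with reward $f_c$, the optimal value over all policies is attained by a pure Markov policy.

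This last claim is the main obstacle, but it is the classical optimality result for finite MDPs, which I would establish here by backward induction along a topological order of the acyclic environment. Writing $V^*_c(s) = \sup_\pi V^\pi_c(s)$, the base case is $V^*_c(s) = 0$ on $\S_\top$, and the inductive step is the Bellman optimality relation $V^*_c(s) = \max_{p \in \Delta(\A)} \sum_a p(a)\,\E_{s'\sim s,a}(f_c(s,a,s') + V^*_c(s'))$. Two observations collapse this to pure Markov policies: by the Markov property the continuation problem from $s$ depends only on $s$, so memory states cannot raise the value; and the inner maximization is of a function of $p$ that is linear over the simplex $\Delta(\A)$, whose maximum is attained at a vertex, i.e.\ at a deterministic action. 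Selecting such a maximizing action at every state yields a pure Markov policy $\pi^*$ with $V^{\pi^*}_c(s) = V^*_c(s)$, establishing $\sup_\pi V^\pi_c(s) = \max_{\pi \in \Pi^p} V^\pi_c(s)$.

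Combining the three steps, the support functions of $\V(s)$ and $W(s)$ agree in every direction $c$, so the two compact convex sets are equal, which together with the easy inclusion proves the lemma. I would also remark that the same argument shows directly that every vertex of the polytope $\V(s)$ is realized by a pure Markov policy: a vertex is the unique maximizer of some linear functional $\langle c, \cdot\rangle$, and that maximizer is a pure Markov value by the optimality result, giving an alternative route to the hard inclusion through the Minkowski representation of a polytope by its extreme points.
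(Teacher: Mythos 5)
Your proof is correct, and it reaches the lemma by a genuinely different route than the paper. The paper's proof is a two-step reduction: it first invokes a cited occupancy-measure equivalence between memory-based and Markovian policies to get $\V(s) = \{V^\pi(s) \suchthat \pi\in\Pi^0\}$, then writes this convex set as the convex hull of its vertices, notes that each vertex maximizes some linear functional $y\cdot V^{\pi'}(s)$, and appeals to the standard fact that such maximizers may be taken deterministic. You never touch occupancy measures: you compare support functions, scalarize with $f_c=\langle c,f\rangle$, and prove the needed optimality fact --- that the supremum of the scalar value over all memory-based policies is attained by a pure Markov policy --- directly by backward induction over the finite acyclic environment, where a single induction simultaneously shows that memory cannot help (the continuation bound depends only on $s$) and that randomization cannot help (a linear function on $\Delta(\A)$ is maximized at a vertex). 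The paper's route is shorter and yields the stronger intermediate statement that Markovian values alone already exhaust $\V(s)$ with no hull needed, but it rests on two imported results; your route is self-contained and elementary, and makes explicit exactly where finiteness and acyclicity enter. A further small robustness point in your favour: you invoke the paper's claim that $\V(s)$ is a polytope only to get compactness, but you could dispense with even that, since equality of support functions gives $\mathrm{cl}\,\conv\V(s)=W(s)$, which together with your easy inclusion $W(s)\subseteq\V(s)$ already forces $\V(s)=W(s)$.
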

\begin{proof}
Any memory-based policy admits a Markovian policy with the same occupancy measure and hence the same expected Total \cite{feinberg1996stationary}.
Hence $\V(s) = \{V^\pi(s) \suchthat \pi \in \Pi^0\}$.
A convex set is the convex hull of its vertices,
so $\V(s) = \conv\{V^\pi(s) \suchthat \exists y \in \reals^d, \pi = \arg\max_{\pi' \in \Pi^0} (y\cdot \V^{\pi'}(s))\}$.
Finally, maximal policies may be taken to be deterministic, which concludes the argument.
\qed
\end{proof}
As a consequence, for any aspiration set $\Easp$ intersecting the feasibility set $\V(s_0)$,
there exists a tuple $\R$ of pure Markov reference policies
such that $\V^\R(s_0) \cap \Easp \neq \emptyset$.
Section \ref{sec:refsimp} describes a heuristic algorithm for finding such reference policies.



We now turn to explaining a way to enact the aspiration-propagation steps needed in lines
\ref{line:actionpropset} and \ref{line:statepropset} of algorithm \ref{alg:generalaspset}, based on shifting and shrinking.

\section{Propagating aspirations}
\subsection{Propagating action-aspirations to state-aspirations}
To implement algorithm schema \ref{alg:generalaspset}, we first focus on line \ref{line:statepropset} in procedure \textsc{FulfillActionAspiration}, which is the easier part. Given a state-action pair $s, a$
and an action-aspiration set $\Easp_a \subseteq \Q^\R(s, a)$,
we must construct nonempty state-aspiration sets $\Easp_{s'} \subseteq \V^\R(s')$
for all possible successor states $s'$,
such that $\E_{s' \sim s, a}(f(s, a, s') + \Easp_{s'}) \subseteq \Easp_a$.
We assume that all reference simplices are nondegenerate,
i.e. have full dimension $d$. 
This is almost surely the case if there are sufficiently many actions and these have enough possible consequences.

\paragraph{Tracing maps.}
Under this assumption,
we define
tracing maps $\rho_{s, a, s'}$ from the reference simplex
$\Q^\R(s,a)$ to $\V^\R(s')$.
Since domain and codomain are simplices,
we can choose $\rho_{s,a,s'}$ to be the unique affine linear map that maps vertices to vertices,
$\rho_{s, a, s'}(Q^{\pi_i}(s, a)) = V^{\pi_i}(s')$. 
For any point $e \in \Q^\R(s, a)$, it follows from equation \eqref{Vpibellman}
that $\E_{s' \sim s,a}(f(s, a, s') +  \rho_{s, a, s'}(e)) = e$.
Accordingly, to propagate aspirations of the form $\Easp_a = \{e\}$,
it is sufficient to just set $\Easp_{s'} = \{\rho_{s, a, s'}(e)\}$.
However, for general subsets $\Easp_a \subseteq \Q^\R(s, a)$,
the set
$\E_{s' \sim s,a}(f(s, a, s') +  \rho_{s, a, s'}(\Easp_a))$
is in general strictly larger than $\Easp_a$,
and hence we map $\Easp_a$ in a different way. 

For this, choose an arbitrary ``anchor'' point $e\in \Easp_a$;
here we let $e$ be the average of the vertices, $C(\Easp_a)$,
but any other standard type of center would also work (e.g. analytic center, center of mass/centroid, Chebyshev center).
Now, let $\X_{s'} = \Easp - e + \rho_{s, a, s'}(e)$ be shifted copies of the action-aspiration.
We would like to use the $\X_{s'}$ as state-aspirations, and indeed they have the property that
\begin{align}
\E_{s' \sim s,a}(f(s, a, s') + \X_{s'}) &= \E_{s' \sim s,a}( f(s, a, s') + \rho_{s, a s'}(e) - e + \Easp_a) \\
&= e - e + \E_{s'\sim s, a} (\Easp_a) = \Easp_a \quad\text{as $\Easp_a$ is convex.}
\end{align}
This is almost what we want, but it might be that $\X_{s'}$ is not a subset of $\V^\R(s')$.
To rectify this,
we opt for a shrinking approach, setting
$\Easp_{s'} = r_{s'}\cdot(\Easp_a - e) + \rho_{s, a, s'}(e)$
for the largest $r_{s'} \in [0, 1]$ such that the result is a subset of $\V^\R(s')$.
As $\Easp_{s'}$ does not depend on any other successor state $s''\neq s'$, we can wait until the true $s'$ is known and only compute $\Easp_{s'}$ for that $s'$, which saves computation time.

\begin{proposition}\label{Ccomplexity}
    Given all values $V^{\pi_i}(s)$ and $Q^{\pi_i}(s,a)$ and both the $k_c\ge d+1$ constraints and $k_v\ge d+1$ vertices defining $\Easp_a$, 
    the shrinking version of action-to-state aspiration propagation has 
    time complexity $O( [k_c^{1.5}d + (dk_v)^{1.5}]L )$,
    where $L$ is the precision parameter defined in \cite{Vaidya1989}.
    If $\Easp_a$ is a simplex, this is $O(d^3 L)$.
\end{proposition}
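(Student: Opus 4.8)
The plan is to bound the cost of each elementary operation in the shrinking procedure and then show that the total is dominated by two linear-programming subproblems, one whose size is governed by the $k_c$ facet inequalities of $\Easp_a$ and one governed by its $k_v$ vertices (each in dimension $d$).

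First I would dispatch the cheap steps. The anchor $e = C(\Easp_a)$ is the average of the $k_v$ given vertices, costing $O(d k_v)$. The tracing map $\rho_{s,a,s'}$ is the unique affine map sending the $d+1$ vertices $Q^{\pi_i}(s,a)$ of $\Q^\R(s,a)$ to the $d+1$ vertices $V^{\pi_i}(s')$ of $\V^\R(s')$; since both are nondegenerate simplices (by our full-dimensionality assumption), $\rho_{s,a,s'}$ is recovered by solving one $(d+1)\times(d+1)$ linear system in $O(d^3)$ time, after which $c := \rho_{s,a,s'}(e)$ costs $O(d^2)$. From the $d+1$ vertices of $\V^\R(s')$ I would likewise recover its facet inequalities $a_k^\top x \le b_k$ in $O(d^3)$. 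Finally, once the scaling factor $r_{s'}$ is known, passing from the representation of $\Easp_a$ to that of $\Easp_{s'} = r_{s'}(\Easp_a - e) + c$ is an affine change of both its $k_v$ vertices and its $k_c$ inequalities, costing $O(d(k_v + k_c))$. All of these are lower-order relative to the claimed bound.

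The crux is computing the largest $r_{s'} \in [0,1]$ with $r_{s'}(\Easp_a - e) + c \subseteq \V^\R(s')$. Because $c \in \V^\R(s')$ and $\V^\R(s') = \{x : a_k^\top x \le b_k\}$, containment is equivalent to $r_{s'}\,\alpha_k \le b_k - a_k^\top c$ for every facet $k$, where $\alpha_k = \max_{y \in \Easp_a} a_k^\top(y - e)$ is a support-function value; hence $r_{s'} = \min\bigl(1,\ \min_{k:\alpha_k > 0} (b_k - a_k^\top c)/\alpha_k\bigr)$. I would compute the relevant quantities by linear programming over $\Easp_a$ and invoke Vaidya's algorithm \cite{Vaidya1989} with its precision parameter $L$. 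Evaluating the support functions through the constraint description of $\Easp_a$ ($k_c$ inequalities in dimension $d$) contributes the $O(k_c^{1.5} d\, L)$ term, while solving and certifying the containment through the vertex description ($k_v$ points in dimension $d$, giving an LP with $\Theta(d k_v)$ constraints) contributes the $O((d k_v)^{1.5} L)$ term; summing yields the stated complexity.

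The main obstacle I anticipate is matching the LP sizes to the two terms precisely: I must argue that the support-function and containment computations genuinely reduce to Vaidya-solvable LPs of sizes $k_c$ and $\Theta(d k_v)$, check that nondegeneracy makes these LPs well posed and $\rho_{s,a,s'}$ invertible (so that $c \in \V^\R(s')$ and all ratios are nonnegative), and verify that no step secretly incurs an exponential representation conversion. The simplex specialization is then immediate: substituting $k_c = k_v = d+1$ gives $(d+1)^{1.5} d + (d(d+1))^{1.5} = O(d^{2.5}) + O(d^3) = O(d^3)$, so the bound collapses to $O(d^3 L)$.
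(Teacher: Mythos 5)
Your overall decomposition (anchor, tracing map, shrink factor, representation conversion) is sound and structurally close to the paper's, and your containment LP over the vertex description --- $(d+1)k_v$ constraints in the single variable $r$, giving $O((dk_v)^{1.5}L)$ --- is exactly the LP the paper uses. But there is a genuine accounting error in your other LP step. You evaluate the support values $\alpha_k=\max_{y\in\Easp_a}a_k^\top(y-e)$ through the \emph{constraint} description of $\Easp_a$, i.e.\ by an LP with $m=k_c$ constraints and $n=d$ variables; by Vaidya's bound one such LP costs $O((k_c+d)^{1.5}dL)=O(k_c^{1.5}dL)$, but you need one per facet of the simplex $\V^\R(s')$, i.e.\ $d+1$ of them, so this step costs $O(k_c^{1.5}d^2L)$, not the $O(k_c^{1.5}dL)$ you claim. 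This overshoots the proposition's bound by a factor of $d$, and it cannot be absorbed into the $(dk_v)^{1.5}$ term either (take $\Easp_a$ a cross-polytope: $k_v=2d$ but $k_c=2^d$). So as written your argument only establishes $O([k_c^{1.5}d^2+(dk_v)^{1.5}]L)$.

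The fix is easy, and it exposes that your two LP computations are redundant: either one alone determines $r_{s'}$. If you keep the support-function route, compute each $\alpha_k$ from the \emph{vertex} list instead (a linear functional on a polytope attains its maximum at a vertex), costing $O(dk_v)$ per facet and $O(d^2k_v)\le O((dk_v)^{1.5})$ in total with no LP at all; then $r_{s'}$ follows from your closed-form minimum. Alternatively, drop the support functions entirely and keep only the containment LP, which is the paper's route. Note also that in the paper the $k_c^{1.5}d$ term has a different origin than you assumed: it is the cost of computing the anchor $C(\Easp_a)$ by an LP with $k_c+1$ constraints and $d+1$ variables, whereas you take the anchor to be the vertex average (cost $O(dk_v)$, consistent with the main text's definition), so your corrected argument does not need the $k_c^{1.5}d$ term at all and in fact yields a slightly stronger bound.
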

\begin{proof}
    A linear program (LP) with $m$ constraints and $n$ variables has time complexity $O(f(m,n)L)$, where $f(m,n)=(m+n)^{1.5}n$ and $L$ is a precision parameter \cite{Vaidya1989}.
    $C(\Easp_a)$ can be calculated with an LP with $m=k_c+1$ and $n=d+1$. 
    Finding the convex coefficients for $\rho_{s,a,s'}$ requires solving a system of $d+1$ linear equations,
    needing time $O(d^\omega)$, where $2 \leq \omega < 2.5$ is the exponent of matrix multiplication. 
    Finding the shrinking factor $r$ is an LP with $m=(d+1)k_v$ and $n=1$,
    and then computing the constraints and vertices of $\Easp_{s'}$ is $O(d(k_c+k_v))$. 
    In all, this gives a time complexity of
    $\O( Lf(k_c,d) + d^\omega + Lf(dk_v,1) + d(k_c+k_v) )
    \le \O( L(k_c+d)^{1.5}d + d^\omega + L(dk_v)^{1.5} )
    \le \O( L(k_c^{1.5}d + (dk_v)^{1.5}) ) $.
    \qed
\end{proof}

\subsection{Choosing actions and action-aspirations}
\label{sec:choosingactions}
This is the core of our construction.
In state $s$ with state-aspiration $\Easp$, the policy probabilistically selects an action $a$ and action-aspiration $\Easp_a$ as follows:
\begin{itemize}[leftmargin=10mm]
\item[(i)] For each of the $d+1$ vertices $V^{\pi_i}(s)$ of the state's reference simplex $\V^\R(s)$, find a \emph{directional action set} $\A_i\subseteq\A$ containing those actions whose reference simplices $\Q^\R(s,a)$ ``lie between'' $\Easp$ and the vertex $V^{\pi_i}(s)$.
\item[(ii)] From the full action set $\A_0=\A$ and from each directional action set $\A_i$ ($i=1\dots d+1$) independently, use some arbitrary, potentially probabilistic procedure to select one element, giving \emph{candidate actions} $a_0,\dots,a_{d+1}$.
\item[(iii)] For each candidate $a_i$, compute an action-aspiration $\Easp_{a_i}$ by shifting and shrinking the state-aspiration $\Easp$ into the reference simplex $\Q^\R(s,a_i)$.  
\item[(iv)] Compute a probability distribution $p\in\Delta(\{0,\dots,d+1\})$ that makes\\ $\E_{i\sim p}\Easp_{a_i}\subseteq\Easp$ and has as large a $p_0$ as possible. 
\item[(v)] Finally execute candidate action $a_i$ with probability $p_i$ ($i=0\dots d+1$) and memorize its action-aspiration $\Easp_{a_i}$.
\end{itemize}
We now describe steps (i)--(iv) in detail. 
Algorithm~\ref{algC} has the corresponding pseudocode, 
and Figure~\ref{figC} illustrates the involved geometric operations.

\begin{algorithm*}
\footnotesize
\caption{Action selection (``shrinking'' variant)\label{algC}}
\begin{algorithmic}[1]
    \Require reference simplex vertices $V^{\pi_i}(s)$, $Q^{\pi_i}(s,a)$; state $s$; state-aspiration $\Easp$; shrinking schedule $r_{\max}(s)$.
    \State $x \gets C(\Easp_s)$; $\Easp' \gets \Easp - x$ \Comment{center, shape}
    \For{$i = 0, \dots, d+1$}
        \State $\A_i \gets$ \Call{DirectionalActionSet}{$i$}
        \State $a_i \gets$ \Call{SampleCandidateAction}{$\A_i$} \Comment{any (probabilistic) choice from $\A_i$}
        \State $\Easp_{a_i} \gets$ \Call{ShrinkAspiration}{$a_i,i$} \Comment{action-aspiration}
    \EndFor
    \State solve linear program $p_0=\max$!, $\sum_{i=0}^{d+1} p_i \Easp_{a_i} \subseteq \Easp$ for $p\in\Delta(\{0,\dots,d+1\})$
    \label{line:maxp0}
    \State sample $i\sim p$ and execute $a_i$
    \State memorize $m\gets(s,a_i,\Easp_{a_i})$ \\

    \Procedure{DirectionalActionSet}{$i$}
        \State if $i=0$, \Return $\A$
        \State $\X_i \gets \conv\{x,V^{\pi_i}(s)\}$ \Comment{segment from $m$ to $i$th vertex}
        \State \Return $\{ a\in\A \suchthat \Q^\R(s,a)\cap\X_i\neq\emptyset \}$ \Comment{actions with feasible points on $\X_i$}
    \EndProcedure \vspace{2mm}
    
    \Procedure{ShrinkAspiration}{$a_i,i$}
        \State $v \gets V^{\pi_i}(s)$ if $i>0$, else $C(\Q^\R(s,a_i))$
        \Comment{vertex or center of target} 
        \label{line:calcv}
        \State $y \gets v - x$
        \Comment{shifting direction}
        \label{line:calcy}
        \State $r \gets \max\{r \in [0,r_{\max}(s)] \suchthat \exists \ell\ge 0, \ell y + x + r\Easp'\subseteq\Q^\R(s,a)\}$
        \Comment{size of largest shifted and shrunk copy of $\Easp$ that fits into $\Q^\R(s,a)$}
        \State $\ell \gets \min\{ \ell\ge 0 \suchthat \ell y + x + r\Easp'\subseteq\Q^\R(s,a) \}$
        \Comment{shortest dist. that makes it fit in}
        \State \Return $\ell y + x + r\Easp'$
        \Comment{shift and shrink}
    \EndProcedure
    
\end{algorithmic}
\end{algorithm*}

\begin{figure}
\centering
\includegraphics[width=.8\textwidth, trim=0 550 700 0, clip]{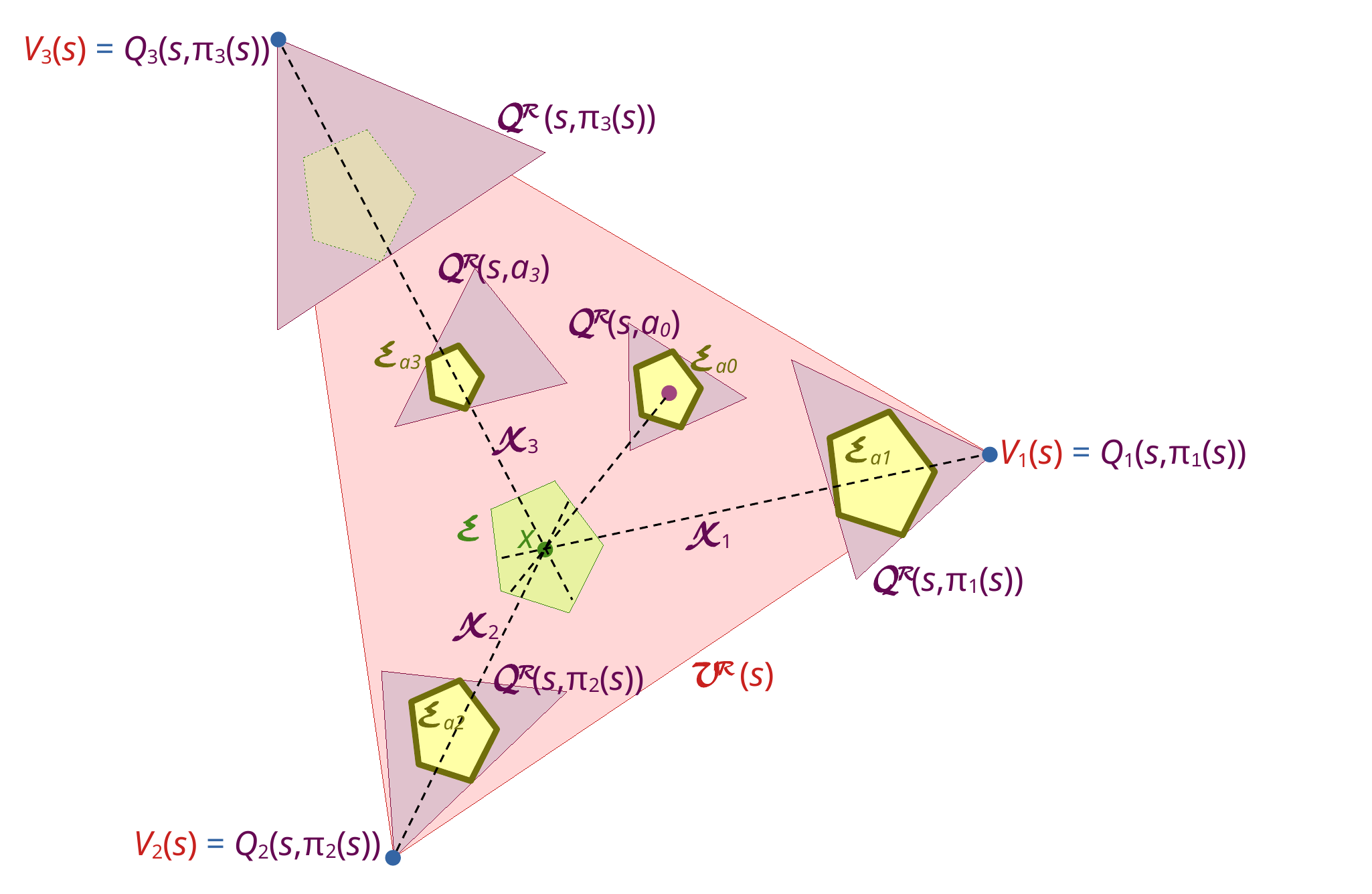}
\caption{Construction of action-aspirations $\Easp_a$ from state-aspiration $\Easp$ and reference simplices $\V^\R(s)$ and $\Q^\R(s,a)$ by shifting and shrinking. See main text for details.} \label{figC}
\end{figure}

\paragraph{(i) Directional action sets.}
Compute the average of the vertices of $\Easp$, $x=C(\Easp)$, and the ``shape'' $\Easp'=\Easp-x$.
For $i=0$, put $\A_i=\A$. 
For $i>0$, let $\X_i$ be the segment from $x$ to the vertex $V^{\pi_i}(s)$ and check for each action $a\in\A$ whether its reference simplex $\Q^\R(s,a)$ intersects $\X_i$, using linear programming. Let $\A_i$ be the set of those $a$, which is nonempty since $\pi_i(s)\in\A_i$ by definition of $V^{\pi_i}(s)$.

\paragraph{(ii) Candidate actions.}
For each $i=0\dots d+1$, use an arbitrary, possibly probabilistic procedure to select a candidate action $a_i\in\A_i$.
Since $\A_0=\A$, any possible action might be among the candidate actions. 
This freedom can be used to improve the policy in terms of the evaluation metrics, e.g., by choosing actions that are expected to lead to a rather low variance of the received Total.
It can also be used to incorporate additional action-selection criteria that are unrelated to these evaluation metrics, to increase overall safety, e.g., by preferring actions that avoid unnecessary side-effects.
We'll discuss this in Section~\ref{sec:criteria}.

\paragraph{(iii) Action-aspirations.}
Given direction $i$ and action $a_i$, we now aim to select a large subset $\Easp_{a_i}\subseteq\Q^\R(s,a_i)$ that fits into a shifted version $z_i + \Easp'$.
We determine a direction $y$ towards which to shift $\Easp_s$: if $i=0$, towards the average of the vertices, $C(\Q^\R(s,a_i))$, otherwise towards the reference vertex $V^{\pi_i}(s)$. This is lines \ref{line:calcv} and \ref{line:calcy} of Alg.~\ref{algC}.
As before, we use shrinking:
find the largest shrinking factor $r \in [0, r_{\max}(s)]$
for which there is a shifting distance $\ell\ge 0$
so that $\ell y + r\Easp'\subseteq\Q^\R(s,a_i)$,
using a linear program with two variables $r,\ell$,
and then find the smallest such $\ell$ for that $r$ using another linear program.
The ``shrinking schedule'' $r_{\max}(s)\in[0,1]$ might be used to enforce some amount of shrinking to increase the freedom for choosing the action mixture, which is the next step.

\paragraph{(iv) Suitable mixture of candidate actions.}
We next find probabilities $p_0,\dots,p_{d+1}$ for the candidate actions so that the corresponding mixture of aspirations $\Easp_{a_i}$ is a subset of $\Easp_s$, $\sum_{i=0}^{d+1}p_i \Easp_{a_i}\subseteq\Easp_s$. 
We show below that this equation has a solution.
\label{sec:whyshrink}
Because we want the action $a_0$ that was chosen freely from the whole action set $\A$ to be as likely as possible, we maximize its probability $p_0$. 
This is done in line \ref{line:maxp0} of Algorithm \ref{algC} using linear programming.
Note that the smaller the sets $\Easp_a$, the looser the set inclusion constraint and thus the larger $p_0$. 
We can influence the latter via the shrinking schedule $r_{\max}(s)$, for example by putting $r_{\max}(s)=(1-1/T(s))^{1/d}$ where $T(s)$ is the remaining number of time steps at state $s$, which would reduce the amount of freedom (in terms of the volume of the aspiration set) linearly and end in a point aspiration for terminal states.
\begin{lemma}\label{lemmasolution}
    The linear program in line 6 of Algorithm~\ref{algC} has a solution.
\end{lemma}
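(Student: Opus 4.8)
The plan is to exhibit an explicit feasible $p$, which suffices since the lemma only asserts that the feasible region of the program is nonempty. The starting point is the observation that, by construction in \textsc{ShrinkAspiration}, every candidate action-aspiration is a shifted and shrunk copy of the state-aspiration: writing $x = C(\Easp)$ and $\Easp' = \Easp - x$, we have
\[
\Easp_{a_i} = z_i + r_i\Easp', \qquad z_i = x + \ell_i y_i,
\]
with $\ell_i \ge 0$, $r_i \in [0, r_{\max}(s)] \subseteq [0,1]$, and, for $i \ge 1$, $y_i = V^{\pi_i}(s) - x$. Since $\Easp'$ is convex and contains $0$ (because $x \in \Easp$), the scalar-times-set identity $\sum_i c_i \Easp' = (\sum_i c_i)\Easp'$ for $c_i \ge 0$ collapses the Minkowski mixture to
\[
\sum_{i=0}^{d+1} p_i \Easp_{a_i} = \bar z + \bar r\,\Easp', \qquad \bar z = \sum_i p_i z_i, \qquad \bar r = \sum_i p_i r_i .
\]

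First I would reduce the target inclusion to a statement about the shift only. Because $\bar r = \sum_i p_i r_i \le r_{\max}(s) \le 1$ for any probability vector $p$, and because $0 \in \Easp'$ convex implies $\bar r \Easp' \subseteq \Easp'$, it is enough to choose $p$ so that $\bar z = x$; then $\sum_i p_i \Easp_{a_i} = x + \bar r\,\Easp' \subseteq x + \Easp' = \Easp$, as required. Thus the entire problem reduces to finding $p \in \Delta(\{0,\dots,d+1\})$ with $\sum_i p_i \ell_i y_i = 0$.

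The key is to cancel the shifts using the directional candidates $i = 1,\dots,d+1$ alone, taking $p_0 = 0$. Since all reference simplices are nondegenerate and $x \in \Easp \subseteq \V^\R(s) = \conv\{V^{\pi_i}(s)\}$, the point $x$ has unique barycentric coordinates $\lambda_i \ge 0$ with $\sum_i \lambda_i = 1$, yielding the relation $\sum_{i=1}^{d+1} \lambda_i y_i = \sum_i \lambda_i (V^{\pi_i}(s) - x) = 0$. I would then set $p_i \propto \lambda_i / \ell_i$ (normalized to sum to $1$) on the indices with $\ell_i > 0$ and $\lambda_i > 0$: this makes $p_i \ell_i \propto \lambda_i$, whence $\sum_i p_i \ell_i y_i \propto \sum_i \lambda_i y_i = 0$, i.e.\ $\bar z = x$.

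The main obstacle — and the only place requiring care — is the degenerate case where some $\ell_i = 0$ while $\lambda_i > 0$, since the division above is then ill-defined. This case is in fact easier: $\ell_i = 0$ means $z_i = x$ already, so placing all mass $p_i = 1$ on such an index immediately gives $\bar z = x$. One should also check beforehand that each $\Easp_{a_i}$ is nonempty, which is precisely where $a_i \in \A_i$ enters: by definition of the directional action set, $\Q^\R(s,a_i)$ meets the segment $\conv\{x, V^{\pi_i}(s)\}$, so the ray $\{x + \ell y_i : \ell \ge 0\}$ meets $\Q^\R(s,a_i)$ and the $r = 0$ configuration is feasible in \textsc{ShrinkAspiration}, guaranteeing that $\ell_i, r_i$ exist. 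Assembling these pieces produces a concrete feasible $p$, proving that the program is solvable.
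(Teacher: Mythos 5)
Your proof is correct and takes essentially the same route as the paper's: both exhibit a feasible $p$ supported on the directional candidates $i=1,\dots,d+1$ (i.e.\ $p_0=0$) by using the barycentric coordinates of $x$ in $\V^\R(s)$ to produce convex coefficients that cancel the shift vectors $\ell_i y_i$, and then conclude by convexity of $\Easp$. If anything, you are more careful than the paper, which asserts that each shift vector is a \emph{positive} multiple of $V^{\pi_i}(s)-x$ and thereby glosses over both the rescaling $p_i\propto\lambda_i/\ell_i$ and the degenerate case $\ell_i=0$, which you handle explicitly.
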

\begin{proof}
Because $x\in\V^\R(s)$, there are convex coefficients $p'$ with $\sum_{i=1}^{d+1}p'_i (V^{\pi_i}(s)-x)=0$. 
As each shifting vector $z_i$ is a positive multiple of $V^{\pi_i}(s)-x$, there are also convex coefficients $p$ with $\sum_{i=1}^{d+1}p_i z_{a_i}=0$.
Since $\Easp_{a_i}\subseteq\Easp_s+z_{a_i}$ and $\Easp_s$ is convex, we then have
\begin{align}\textstyle
    \sum_i p_i \Easp_{a_i} \subseteq \sum_i p_i (\Easp_s + z_{a_i}) \subseteq \Easp_s + \sum_i p_i z_{a_i} = \Easp_s.\label{EEa}\qquad    \qed
\end{align}
\end{proof}

\begin{proposition}
    Given all values $V^{\pi_i}(s)$ and $Q^{\pi_i}(s,a)$ and both the $k_c\ge d+1$ many constraints and $k_v\ge d+1$ many vertices defining $\Easp$, 
    this part of the construction (Algorithm~\ref{algC}) has 
    time complexity $O([k_v^{1.5}d^{2.5}|\A| + (k_vk_c)^{1.5}d]L)$.
    If $\Easp$ is a simplex, this is $O(d^4|\A|L)$.
\end{proposition}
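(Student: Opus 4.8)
The plan is to walk through Algorithm~\ref{algC} line by line, charging each linear program to the cost model $f(m,n)=(m+n)^{1.5}n$ from \cite{Vaidya1989} that already underlies Proposition~\ref{Ccomplexity}, and then to collect the dominant contributions. Throughout I would use the standing assumptions that every reference simplex $\Q^\R(s,a)$ and $\V^\R(s)$ is nondegenerate with exactly $d+1$ vertices (hence $d+1$ facet inequalities), and that $k_v,k_c\ge d+1$; the latter lets me absorb several lower-order terms at the end. The center $x=C(\Easp)$ and shape $\Easp'$ of line~1 are read off from the $k_v$ given vertices in time $O(dk_v)$, which will turn out negligible.

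Next I would cost the two sub-routines invoked in the loop. In \textsc{DirectionalActionSet}, for $i=0$ one only returns $\A$, while for each of the $d+1$ directions $i\ge 1$ and each of the $|\A|$ actions one tests whether the segment $\X_i=\conv\{x,V^{\pi_i}(s)\}$ meets $\Q^\R(s,a)$; parametrizing $\X_i$ by one scalar $t\in[0,1]$ and $\Q^\R(s,a)$ by its $d+1$ convex coordinates turns this into an LP feasibility test with $n=O(d)$ variables and $m=O(d)$ constraints (the $d$ coordinate equalities, the normalization, and the box/sign constraints), hence $f(m,n)=O(d^{2.5})$ per test and $O(d^{3.5}|\A|L)$ over all $i$ and $a$. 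In each of the $d+2$ calls to \textsc{ShrinkAspiration}, the governing constraint $\ell y+x+r\Easp'\subseteq\Q^\R(s,a)$ requires each of the $k_v$ vertices of the shifted, shrunk shape to satisfy each of the $d+1$ facet inequalities of the simplex, giving $m=O(k_vd)$ constraints in the $n=2$ variables $(r,\ell)$ (the two-variable program for $r$ and the one-variable program for $\ell$ are both of this size); each call then costs $O((k_vd)^{1.5}L)$, for a total of $O(k_v^{1.5}d^{2.5}L)$.

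The crux is the final mixture LP of line~\ref{line:maxp0}. The key observation is that every $\Easp_{a_i}$ is a translate of the common shrunk shape, $\Easp_{a_i}=c_i+r_i\Easp'$ for a point $c_i$ and scalar $r_i\in[0,1]$; since $\Easp'$ is convex and the $p_i,r_i$ are nonnegative, the Minkowski-weighted sum collapses, $\sum_i p_i\Easp_{a_i}=\big(\sum_i p_ic_i\big)+\big(\sum_i p_ir_i\big)\Easp'=:c(p)+\rho(p)\Easp'$, with $c(p)$ and $\rho(p)$ affine in $p$. Hence the inclusion $\sum_i p_i\Easp_{a_i}\subseteq\Easp$ is equivalent to demanding, for each of the $k_v$ vertices $v_j'$ of $\Easp'$ and each of the $k_c$ facet inequalities $A_k\cdot z\le b_k$ of $\Easp$, that $A_k\cdot\big(c(p)+\rho(p)v_j'\big)\le b_k$; each such inequality is linear in $p$. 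This is an LP with $m=k_vk_c$ constraints and $n=d+2$ variables, costing $O((k_vk_c)^{1.5}dL)$.

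Finally I would add up. Using $k_v\ge d+1$ we have $d\le k_v^{1.5}$, so the directional term $O(d^{3.5}|\A|L)=O(d\cdot d^{2.5}|\A|L)$ and the shrinking term $O(k_v^{1.5}d^{2.5}L)$ are both dominated by $O(k_v^{1.5}d^{2.5}|\A|L)$; together with the mixture term this yields the claimed $O([k_v^{1.5}d^{2.5}|\A|+(k_vk_c)^{1.5}d]L)$, and specializing to a simplex aspiration ($k_v=k_c=d+1$) collapses both summands to $O(d^4|\A|L)$. I expect the main obstacle to be the mixture step: one must justify the Minkowski collapse carefully, so that the set inclusion is genuinely linear rather than bilinear in $p$, and count its constraints honestly as $k_vk_c$, while also getting the variable/constraint bookkeeping of the intersection and shrink programs right so that the per-LP bounds $O(d^{2.5})$ and $O((k_vd)^{1.5})$ are actually valid under the Vaidya model.
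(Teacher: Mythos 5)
Your proposal is correct and follows essentially the same route as the paper: a step-by-step accounting of each linear program under Vaidya's cost model $f(m,n)=(m+n)^{1.5}n$, with the directional-action tests and shrink programs together contributing $O(k_v^{1.5}d^{2.5}|\A|L)$ and the mixture LP (about $k_vk_c$ constraints in $d+2$ variables) contributing $O((k_vk_c)^{1.5}dL)$. Your explicit justification of why the inclusion $\sum_i p_i\Easp_{a_i}\subseteq\Easp$ is genuinely linear in $p$ --- the Minkowski collapse $\sum_i p_i(c_i+r_i\Easp') = c(p)+\rho(p)\Easp'$, valid because all action-aspirations are scaled translates of the common convex shape $\Easp'$ --- is a detail the paper's proof leaves implicit by merely asserting the LP's size $m=d+4+k_vk_c$, $n=d+2$, so your write-up is, if anything, more complete.
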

\begin{proof}
    Using the notation from 
    Prop.~\ref{Ccomplexity},
    computing $x$ is $O(f(k_v,d)L)$.
    For each $i,a$, verifying whether $a\in\A_i$ and computing 
    $r,\ell$ is done by LPs with $m\le O(dk_v)$ constraints and $n\le 2$ variables,
    giving $O(d|\A| f(dk_v,2)L) = O(d^{2.5}k_v^{1.5}|\A|L)$. 
    The LP for calculating $p$ has $m=d+4+k_vk_c$ and $n=d+2$,
    hence complexity $O( (k_vk_c)^{1.5}dL )$.
    The other arithmetic operations are of lower complexity.    \qed   
\end{proof}

\FloatBarrier
\section{Determining appropriate reference policies}
\label{sec:refsimp}
First, find some feasible aspiration point $x\in\Easp_0\cap\V(s_0)$ (e.g., using binary search).
We now aim to find policies whose values are likely to contain $x$ in their convex hull, by using backwards induction and greedily minimizing the angle between the vector $\E\tau - x$ and a suitable direction in evaluation space.

More precisely: Pick an arbitrary direction (unit vector) $y_1\in\R^d$, e.g., uniformly at random. Then, for $k=1,2,\dots$ until stopping:
\begin{enumerate}
\item Let $\pi_k$ be the pure Markovian policy $\pi$ defined by backwards induction as 
\begin{align}
    \pi(s) &= \underset{a \in\A}{\mathrm{argmax}}\frac{y_k^\top\correction{z(s,a)}}{||\correction{z(s,a)}||_2}, 
    & \correction{z(s,a)} &\correction{{} = Q^{\pi}(s,a) - \frac{\ell(s)}{\rho(s)+\ell(s)}x},\label{refpol}
\end{align}
\correction{where $\rho(s)\ge 0$ is the minimal no.\ of time steps from $s_0$ to $s$ and $\ell(s)\ge 1$ is the maximal no.\ of time steps after $s$.}
Let $v_k = V^{\pi_k}(s_0)$ be the resulting candidate vertex for our reference simplex. If $k\ge d+1$, run a primal-sparse linear program solver \cite{yen2015sparse} to determine whether $x\in\conv\{v_1,\dots,v_k\}$. If so, the solver will return a basic feasible solution, i.e. $d+1$ many vertices that contain $x$ in their convex hull. Let the policies corresponding to the vertices that are part of the basic feasible solution be our reference policies and stop. Otherwise, continue to step 2 below:
\item Let $e_k = (x - v_k) / ||x - v_k ||_2$ be the unit vector in the direction from $v_k$ to $x$.
\item Let $y_{k+1} = \sum_{i=1}^k e_i / k$ be the average of all those directions. Assuming that $x \in \V(s_0)$ and because of the hyperplane separation theorem, choosing directions like this ensures that the algorithm doesn't loop with $v_{l+1} = v_{l}$ for some $l$. Also, note that to check $x \in \conv\{v_1,\dots,v_k,v_{k+1}\}$ it is sufficient to check $v_{k+1} - x \in \mathrm{cone}\{x-v_1,\cdots,x-v_k\}$, the cone generated by the negative of the vertices centred at $x$. So hunting for a policy whose value lies approximately in the direction $y_{k+1}$ from $x$ gives us a good chance of finding a vertex in the aforementioned cone.
\end{enumerate}
We were not able to prove any complexity bounds for this part, but performed numerical experiments with random binary tree-shaped environments of various time horizons, with only two actions (making the directions $v_k-x$ deviate rather much from $y$ and thus presenting a kind of worst-case scenario) and two possible successor states per step, and uniformly random transition probabilities and Deltas $\in[0,1]^d$. 
These suggest that the expected number of iterations of the above is $O(d)$, which we thus conjecture to be also the case for other sufficiently well-behaved random environments. 
Indeed, even if the policies $\pi_k$ were chosen uniformly \emph{at random} (rather than targeted like here) and the corresponding points $V^{\pi_k}(s_0)$ were distributed uniformly in all directions around $x$ (which is a plausible uninformative prior assumption), then one can show easily (using \cite{wendel1962problem}) that the expected number of iterations would be exactly $2d+1$.\footnote{%
    According to \cite{wendel1962problem}, the probability that we need exactly $k\ge d+1$ iterations is $f(k-1) - f(k)$ with $f(k) = 2^{1-k} \sum_{\ell=0}^{d-1} \binom{k-1}{\ell}$.
    Hence $\E k = \sum_{k=d+1}^{\infty}k(f(k-1) - f(k)) = (d+1) f(d) + \sum_{k=d+1}^\infty f(k)$. It is then reasonably simple to prove by induction that $\sum_{k=d+1}^\infty f(k) = d$, and hence $\E k = 2d+1$.
    }

\section{Selection of candidate actions}
\label{sec:criteria}

As we have seen in section \ref{sec:choosingactions} (ii), when choosing actions, we still have many remaining degrees of freedom. Thus, we can use additional criteria to choose actions while still fulfilling the aspirations. We discuss a few candidate criteria here which are related either to gaining information, improving performance, or reducing potential safety-related impacts of implementing the policy.

For many of the criteria, there are \emph{myopic} versions, which only rely on quantities that are already available at each step in the algorithms presented so far, or \emph{farsighted} versions which depend on the continuation policy and thus have to be specifically computed recursively via Bellman-style equations.

\paragraph{Information-related criteria.} If the used world model is imperfect, one might want the agent to aim to gain knowledge by exploration, e.g. by considering some measure of expected information gain such as the evidence lower bound.  


\subsection{Performance-related criteria}
For now, the task of the agent in this paper has been given by specifying aspiration sets for the expected total of the evaluation function. It is natural to consider extensions of this approach to further properties of the trajectory distribution, e.g. by specifying that the variance of the total should be small.

A simple, myopic approach to reducing variance is preferring actions and action-aspirations that are somehow close to the state aspiration $\Easp$, e.g. by choosing action-aspirations where the Hausdorff distance $\displaystyle d_{\mathrm {H}}(\Easp_a, \Easp)$ is small.
A more principled, farsighted approach would be choosing actions and action-aspirations such that the variance of the resulting total is small. Based on equation \eqref{Vpibellman}, the variance can be computed from the total raw second moment $M^\pi$ as
\begin{align}
     M^\pi(s,a,\Easp_a) &= \E_{s',\Easp_{s'}\sim s,a,\Easp_a}\big[||f(s,a,s')||_2^2 + 2f(s,a,s')\cdot V^\pi(s',\Easp_{s'})\nonumber \\
     &\qquad\qquad\qquad\qquad\qquad + \E_{a',\Easp_{a'}\sim \pi(s',\Easp_{s'})}M^\pi(s',a',\Easp_{a'})\big], \\
     \Var(s,a,\Easp_a) &= M^\pi(s,a,\Easp_a) - ||Q^\pi(s,a,\Easp_a)||_2^2.
\end{align}
Note that computing this farsighted metric requires knowing the continuation policy $\pi$, for which algorithm \ref{algC} does not suffice in its current form as it only samples actions. It is however easy to convert it to an algorithm for computing the whole local policy $\pi(s,\Easp)$, which is described in the Supplement.




\subsection{Safety-related criteria}
As mentioned in the introduction, unintended consequences of optimization can be a source of safety problems, thus we suggest to not use any of the criteria introduced in this section as maximization/minimization goals to completely determine the chosen actions; instead, they can be combined into a loss for a softmin action selection policy $\pi_i(a\in\A_i)\propto\exp\big(-\beta\sum_j\alpha_jg_j(a)\big)$, where $g_j(a)$ are the individual criteria. Indeed, in analogy to quantilizers, choosing among adequate actions at random can by itself be considered a useful safety measure, as a random action is very unlikely to be special in a harmful way.
\paragraph{Disordering potential.}
Our first safety criterion is related to the idea of ``fail safety'', and (somewhat more loosely) to ``power seeking''. More precisely, it aims to prevent the agent from moving the system into a state from which it could make the environment's state trajectory become very unpredictable (bring the environment into ``disorder'') because of an internal failure, or if it wanted to. We define the \emph{disordering potential} at a state to be the Shannon entropy $H^\pi(s_t)$ of the stochastic state trajectory $S_{>t} = (S_{t+1},S_{t+2},\dots)$ that would arise from the policy $\pi$ which maximizes that entropy:
\begin{align}
    H^\pi(s_t) := \E_{s_{>t}|s_t,\pi}(-\log\Pr(s_{>t}|s_t,\pi)).
\end{align}
It is straightforward to compute this quantity using the Bellman-type equations
\begin{align}
    H^\pi(s) &= 1_{s\notin\S_\top}\,\E_{a\sim\pi(s)}(-\log\pi(s)(a) + H^\pi(s,a)), \\
    H^\pi(s,a) &= \E_{s'\sim s,a}(-\log\T(s, a)(s') + H^\pi(s')).
\end{align}
To find the maximally disordering policy $\pi$, we assume $\pi(s')$ and thus $H^\pi(s')$ is already known for all potential successors $s'$ of $s$. Then $H^\pi(s,a)$ is also known for all $a$ and to find $p_a = \pi(s)(a)$ we need to maximize $f(p) = \sum_a p_a(\H^\pi(s,a)-\log p_a)$ such that $\sum_a p_a = 1$. Using Lagrange multipliers, we find that for all $a$, $\partial_{p_a}f(p) = \H^\pi(s,a)-\log p_a - 1 = \lambda$ for some constant $\lambda$, hence $p_a \propto \exp(H^\pi(s,a))$ is a softmax policy w.r.t. future expected Shannon entropy. Therefore
\begin{align}
    \pi(s)(a) &= p_a = \exp(H^\pi(s,a)) / Z, &
    Z &= \textstyle\sum_a\exp(H^\pi(s,a)), \\
    H^\pi(s) &= \log Z = \log \textstyle\sum_a\exp(H^\pi(s,a)).
\end{align}

\paragraph{Deviation from default policy.}
If we have access to a default policy $\pi^0$ (e.g. a policy that was learned by observing humans or other agents performing similar tasks), we might want to choose actions in a way that is similar to this default policy. An easy way to measure this is by using the Kullback--Leibler divergence from the default policy $\pi^0$ to the agent's policy $\pi$. Given that we do not know the local policy $\pi(s)$ yet when we decide how to choose the action in the state $s$, we use an estimate $\hat p_a$ (e.g. $\hat p_a=1/(2+d)$) instead to compute the expected total Kullback--Leibler divergence like
\begin{align}
    \KLdiv(s,\hat p_a,a,\Easp_a) &= 
        \log(\hat p_a / \pi^0(s)(a)) + \E_{s'\sim s,a}\KLdiv(s,g(s, a, \Easp_a, s')), \\
    \KLdiv(s,\Easp) &= 1_{s\notin\S_\top}\,\E_{(a,\Easp_a)\sim\pi(s,\Easp)}\KLdiv(s,\pi(s,\Easp)(a),a,\Easp_a),
\end{align}
where $g:(s,a,\Easp_a,s')\mapsto\Easp_{s'}$ implements action-to-state aspiration propagation.

\section{Discussion and conclusion}

\subsection{Special cases}

\paragraph{A single evaluation metric.}
It is natural to ask what our algorithm reduces to in the single-criterion case $d=1$. 
The reference simplices can then simply be taken to be the intervals $\V^\R(s) = [V^{\pi_{\min}}(s), V^{\pi_{\max}}(s)]$ and $\Q^\R(s,a) = [Q^{\pi_{\min}}(s,a), Q^{\pi_{\max}}(s,a)]$, where $\pi_{\min},\pi_{\max}$ are the minimizing and maximizing policies for the single evaluation metric.
Aspiration sets are also intervals, and action-aspirations $\Easp_a$ are constructed by shifting the state-aspiration $\Easp$ upwards or downwards into $\Q^\R(s,a)$ and shrinking it to that interval if necessary.
To maximize $p_{a_0}$, the linear program for $p$ will assign zero probability to that ``directional'' action $a_1$ or $a_2$ whose $\Easp_a$ lies in the same direction from $\Easp$ as $\Easp_{a_0}$ does. In other words, the agent will mix between the ``freely'' chosen action $a_0$ and a suitable amount of a ``counteracting'' action $a_1$ or $a_2$ in the other direction.

\paragraph{Relationship to satisficing.}
A subcase of the $d=1$ case is when the upper bound of the initial state-aspiration interval coincides with the maximal possible value, $\Easp_0 = [e, V^{\pi_{\max}}(s_0)]$, i.e., when the goal is to achieve an expected Total of at least $e$. 
The agent then starts out as a form of ``satisficer'' \cite{simon1956rational}. 
However, due to the shrinking of aspirations over time, aspiration sets of later states $s'$ might no longer be of the same form but might end at values strictly lower than $V^{\pi_{\max}}(s')$ if the interval $[V^{\pi_{\min}}(s'), V^{\pi_{\max}}(s')]$ is wider than the interval $[Q^{\pi_{\min}}(s,a), Q^{\pi_{\max}}(s,a)]$. 
In other words, even an initial satisficer can turn into a ``proper'' aspiration-based agent in our algorithm that avoids maximization in more situations than a satisficer would. 
In particular, also the form of satisficing known as ``quantilization'' \cite{taylor2015quantilizers}, where all feasible expected Totals above some threshold get positive probability, is not a special case of our algorithm.
One can however change the algorithm to quantilization behaviour by constructing successor state aspirations differently, by simply applying the tracing map to the interval, $\Easp_{s'} = \rho_{s,a,s'}[\Easp_a]$ (which is not feasible for $d>1$).

\paragraph{Probabilities of desired or undesired events.}
Another special case is when $d>1$ but the $d$ evaluation metrics are simply indicator functions for certain events. E.g., assume all Deltas are zero except when reaching a terminal state $s'\in\S_\top$, in which case $f_i(s,a,s') = \mathbb 1(s'\in E_i)$ for some subset of desirable or undesirable states $E_i\subseteq\S_\top$.
If the first $k\le d$ many events are desirable in the sense that we want each probability $\Pr(E_i)$ to be $\ge\alpha$ for some $\alpha<1$, and the other $d-k$ many events are undesirable in the sense that we want each probability $\Pr(E_j)$ to be $\le\beta$ for some $\beta>0$, then we can encode this goal as the initial aspiration set $\Easp_0 = [\alpha,1]^k\times[0,\beta]^{d-k}$. 
Note that the different events need not be independent or mutually exclusive, as long as the aspiration is feasible.
Aspirations of this type might be especially natural in combination with methods of inductive reasoning and belief revision that are also based on this type of encoding \cite{kern2024inductive}. This could eventually be useful for a ``provably safe'' approach to AI \cite{dalrymple2024towards}. 

\subsection{Relationship to reinforcement learning}
Even though we formulated our approach in a planning framework where the environment's transition probabilities are known and simple enough to admit dynamic programming, it is clear from Eq.~\eqref{refpol} that the required reference policies $\pi$ and corresponding reference vertices $V^\pi(s)$, $Q^\pi(s,a)$ can in principle also be approximated by reinforcement learning techniques such as (deep) expected SARSA in more complex environments or environments that are given only as samplers without access to transition probabilities. 
For the single-criterion case, preliminary results from numerical experiments suggest that this is indeed a viable approach.\footnote{Note however that some safety-related action selection criteria, especially those based on information-theoretic concepts, require access to transition probabilities which would then have to be learned in addition to the reference simplices.} Future work should explore this further and also consider using approximate dynamic programming methods (e.g., \cite{bonet2009solving}).

If the expected number of learning passes needed to find the necessary reference policies is indeed $O(d)$ as conjectured (see end of Section \ref{sec:refsimp})\footnote{Farsighted action selection criteria would require an additional learning pass to also learn the actual policy and the resulting action evaluations.}, our approach might turn out to have much lower average complexity than the alternative reinforcement learning approach to convex aspirations from \cite{Miryoosefi2019}, which appears to require up to $O(\epsilon^{-2})$ many learning passes to achieve an error of less than $\epsilon$.

\subsection{Invariance under reparameterization}
For many applications there will be several possible parameterizations of the $d$-dimensional evaluation space into $d$ different evaluation metrics, so the question arises which parts of our approach are invariant under which types of reparameterizations of evaluation space.
It is easy to see that all parts are invariant under affine transformations, except for the algorithm for finding reference policies which is only invariant under orthogonal transformations since it makes use of angles, and except for certain safety criteria such as total variance.




\begin{credits}
\subsubsection{\ackname} We thank the members of the \href{https://pik-gane.github.io/satisfia/}{SatisfIA project}, \href{https://aisafety.camp/}{AI Safety Camp}, the \href{https://sparai.org/}{Supervised Program for Alignment Research}, and the organizers of the \href{https://vaisu.ai/}{Virtual AI Safety Unconference}.

\subsubsection{Supplementary Materials.}
This article is accompanied by a supplementary text, containing alternative versions of the main algorithm, and a supplementary video illustrating the evolution of action-aspirations over a sample episode with $d=2$. The text, which contains a link to the video, is available at \url{https://doi.org/10.5281/zenodo.13318993}, and there is Python code available at \url{https://doi.org/10.5281/zenodo.13221511}.

\subsubsection{\discintname}
The authors have no competing interests to declare. 

\subsubsection{CRediT author statement.}
Authors are listed in alphabetical ordering and have contributed equally.
Simon Dima: Formal analysis, Writing - Original Draft, Writing - Review \& Editing.
Simon Fischer: Formal analysis, Writing - Original Draft, Writing - Review \& Editing.
Jobst Heitzig: Conceptualization, Methodology, Software,  Writing - Original Draft, Writing - Review \& Editing, Supervision.
Joss Oliver: Formal analysis, Writing - Original Draft, Writing - Review \& Editing.
\end{credits}
%
%
%
\bibliographystyle{splncs04}
\bibliography{refs}

\begin{thebibliography}{10}
\providecommand{\url}[1]{\texttt{#1}}
\providecommand{\urlprefix}{URL }
\providecommand{\doi}[1]{https://doi.org/#1}

\bibitem{amodei2016concrete}
Amodei, D., Olah, C., Steinhardt, J., Christiano, P., Schulman, J., Man{\'e},
  D.: Concrete problems in {AI} safety. arXiv preprint arXiv:1606.06565  (2016)

\bibitem{bonet2009solving}
Bonet, B., Geffner, H.: Solving {POMDPs}: {RTDP-Bel} vs. point-based
  algorithms. In: IJCAI. pp. 1641--1646. Pasadena CA (2009)

\bibitem{chen2021decision}
Chen, L., Lu, K., Rajeswaran, A., Lee, K., Grover, A., Laskin, M., Abbeel, P.,
  et~al.: Decision transformer: Reinforcement learning via sequence modeling
  (2021)

\bibitem{clymer2023generalization}
Clymer, J., et~al.: Generalization analogies ({GENIES}): A testbed for
  generalizing {AI} oversight to hard-to-measure domains. arXiv preprint
  arXiv:2311.07723  (2023)

\bibitem{conitzer2024social}
Conitzer, V., Freedman, R., Heitzig, J., Holliday, W.H., Jacobs, B.M., Lambert,
  N., Moss{\'e}, M., Pacuit, E., Russell, S., et~al.: Social choice for {AI}
  alignment: Dealing with diverse human feedback. arXiv preprint
  arXiv:2404.10271  (2024)

\bibitem{dalrymple2024towards}
Dalrymple, D., Skalse, J., Bengio, Y., Russell, S., Tegmark, M., Seshia, S.,
  Omohundro, S., Szegedy, C., et~al.: Towards guaranteed safe {AI}: A framework
  for ensuring robust and reliable {AI} systems. arXiv preprint
  arXiv:2405.06624  (2024)

\bibitem{feinberg1996stationary}
Feinberg, E.A., Sonin, I.: Notes on equivalent stationary policies in {Markov}
  decision processes with total rewards. Math. Methods Oper. Res.
  \textbf{44}(2),  205--221 (1996). \doi{10.1007/BF01194331},
  \url{https://doi.org/10.1007/BF01194331}

\bibitem{kern2024inductive}
Kern-Isberner, G., Spohn, W.: Inductive reasoning, conditionals, and belief
  dynamics. Journal of Applied Logics  \textbf{2631}(1), ~89 (2024)

\bibitem{Miryoosefi2019}
Miryoosefi, S., Brantley, K., Daum\'{e}, H., Dud\'{\i}k, M., Schapire, R.E.:
  Reinforcement learning with convex constraints. In: Proceedings of the 33rd
  International Conference on Neural Information Processing Systems (2019)

\bibitem{simon1956rational}
Simon, H.A.: Rational choice and the structure of the environment.
  Psychological review  \textbf{63}(2), ~129 (1956)

\bibitem{skalse2023invariance}
Skalse, J.M.V., Farrugia-Roberts, M., Russell, S., Abate, A., Gleave, A.:
  Invariance in policy optimisation and partial identifiability in reward
  learning. In: International Conference on Machine Learning. pp. 32033--32058.
  PMLR (2023)

\bibitem{subramani2023expressivity}
Subramani, R., Williams, M., et~al.: On the expressivity of
  objective-specification formalisms in reinforcement learning. arXiv preprint
  arXiv:2310.11840  (2023)

\bibitem{taylor2015quantilizers}
Taylor, J.: Quantilizers: A safer alternative to maximizers for limited
  optimization (2015),
  \url{https://intelligence.org/files/QuantilizersSaferAlternative.pdf}

\bibitem{tschantz2020reinforcement}
Tschantz, A., et~al.: Reinforcement learning through active inference (2020)

\bibitem{Vaidya1989}
Vaidya, P.: Speeding-up linear programming using fast matrix multiplication.
  In: 30th Annual Symposium on Foundations of Computer Science. pp. 332--337
  (1989)

\bibitem{VAMPLEW2021104186}
Vamplew, P., Foale, C., Dazeley, R., Bignold, A.: Potential-based
  multiobjective reinforcement learning approaches to low-impact agents for
  {AI} safety. Engineering Applications of Artificial Intelligence
  \textbf{100},  104186 (2021)

\bibitem{wendel1962problem}
Wendel, J.G.: A problem in geometric probability. Mathematica Scandinavica
  \textbf{11}(1),  109--111 (1962)

\bibitem{yen2015sparse}
Yen, I.E.H., Zhong, K., Hsieh, C.J., Ravikumar, P.K., Dhillon, I.S.: Sparse
  linear programming via primal and dual augmented coordinate descent. Advances
  in Neural Information Processing Systems  \textbf{28} (2015)

\end{thebibliography}

\newpage



\end{document}


\title{Supplement to: Non-maximizing 
policies that fulfill multi-criterion aspirations in expectation
}
\titlerunning{Supplement: Policies that fulfill multi-criterion aspirations in expectation}
\author{
Simon Dima\inst{1}\orcidID{0009-0003-7815-8238} \and
Simon Fischer\inst{2}\orcidID{0009-0000-7261-3031} \and
Jobst Heitzig\inst{3}\orcidID{0000-0002-0442-8077} \and
Joss Oliver\inst{4}\orcidID{0009-0008-6333-7598}
}
%
%
\institute{
École Normale Supérieure, Paris, France, \email{simon.dima@ens.psl.eu} \and 
Independent Researcher, Cologne, Germany
\and
Potsdam Institute for Climate Impact Research, Potsdam, Germany, \email{heitzig@pik-potsdam.de} \and
Independent Researcher, London, UK, \email{joss.oliver62@gmail.com}
}

\maketitle              

\section{Clipping instead of shrinking}
\subsection{While propagating action-aspirations to state-aspirations}
In section 4.1 of the main text, we define shifted copies $\X_{s'}$
of the action-aspiration $\Easp$,
which would be appropriate candidates for subsequent state-aspi\-rations
if not for the fact that they are not necessarily subsets of $\V^\R(s')$.
In the main text, we resolve this by shrinking $\X_{s'}$ until it fits.

An alternative approach is \emph{clipping}, where we
calculate the sets $\X_{s'}$ and simply set
$\Easp_{s'} = \X_{s'} \cap \V^\R(s')$.

\subsection{While choosing actions and action-aspirations}
In section 4.2.(iii) of the main text,
we shift the aspiration set in a certain direction $y$ and shrink it until it fits into the
action feasibility reference simplices $\Q^\R(s, a_i)$.

Clipping is also an alternative here, as illustrated in figure \ref{figclip}.
To use clipping instead of shrinking, algorithm 2 of the main text
can be modified by replacing the procedure \textsc{ShrinkAspiration}
with the procedure \textsc{ClipAspiration} defined here:
\begin{algorithm*}[h]
\footnotesize
\caption{Clipping variation for algorithm 2 of main text.\label{algCclip}}
\begin{algorithmic}[1]
    \Procedure{ClipAspiration}{$a_i,i$}
        \State $v \gets V^{\pi_i}(s)$ if $i>0$, else $C(\Q^\R(s,a_i))$ \Comment{vertex or center of target} 
        \State $y \gets v - x$ \Comment{shifting direction}
        \State $r \gets \max\{ r\ge 0 : x - ry\in\Easp \}$ \Comment{distance from boundary point $b$ of $\Easp$ to $x$}
        \State $\ell \gets \min\{ \ell\ge 0 : x + \ell y\in\Q^\R(s,a_i) \}$ \Comment{distance from $x$ to reference simplex}
        \label{line:calcl}
        \State $\ell' \gets \max\{ \ell'\ge 0 : x + \ell' y\in\Q^\R(s,a_i) \}$ \Comment{dist. from $x$ to far bdry. of ref. simplex}
        \State $z\gets \min(r+\ell, \ell')y$ \Comment{shifting vector}
        \label{line:calcz}
        \State \Return $\big(z + \Easp_s\big) \cap \Q^\R(s,a_i)$ \Comment{shift and clip (nonempty)}
    \EndProcedure
\end{algorithmic}
\end{algorithm*}

The direction $y$ is determined as in the shrinking variant.
Next, we determine by linear programming the outermost point $b\in\Easp$ lying on the ray from $x$ in the direction $-y$.
$\Easp$ is shifted in direction $y$, stopping either when
the shifted boundary point $b$
(and thus generally a large part of $\Easp$)
enters the reference simplex $\Q^\R(s,a_i)$,
or when the shifted center $x$ exits $\Q^\R(s,a_i)$.
The appropriate shifting distance can be computed using linear programming.
This is done in lines \ref{line:calcl} to \ref{line:calcz} of Algorithm~\ref{algCclip}.
%
Finally, the shifted copy of $\Easp$ is clipped to $\Q^\R(s,a_i)$ by uniting their $H$-representations to get $\Easp_{a_i}$, i.e., consider the set of all hyperplanes on which a facet of either $\Easp$ or $\Q^\R(s,a_i)$ lies.
Then the vertices of $\Easp_{a_i}$ are computed from the resulting $H$-representation.
The sets $\Easp_{a_i}$ are nonempty because the ray from $x$ in direction $y$ intersects $\Q^\R(s,a_i)$.

\begin{figure}
\centering
\includegraphics[width=.8\textwidth, trim=0 550 700 0, clip]{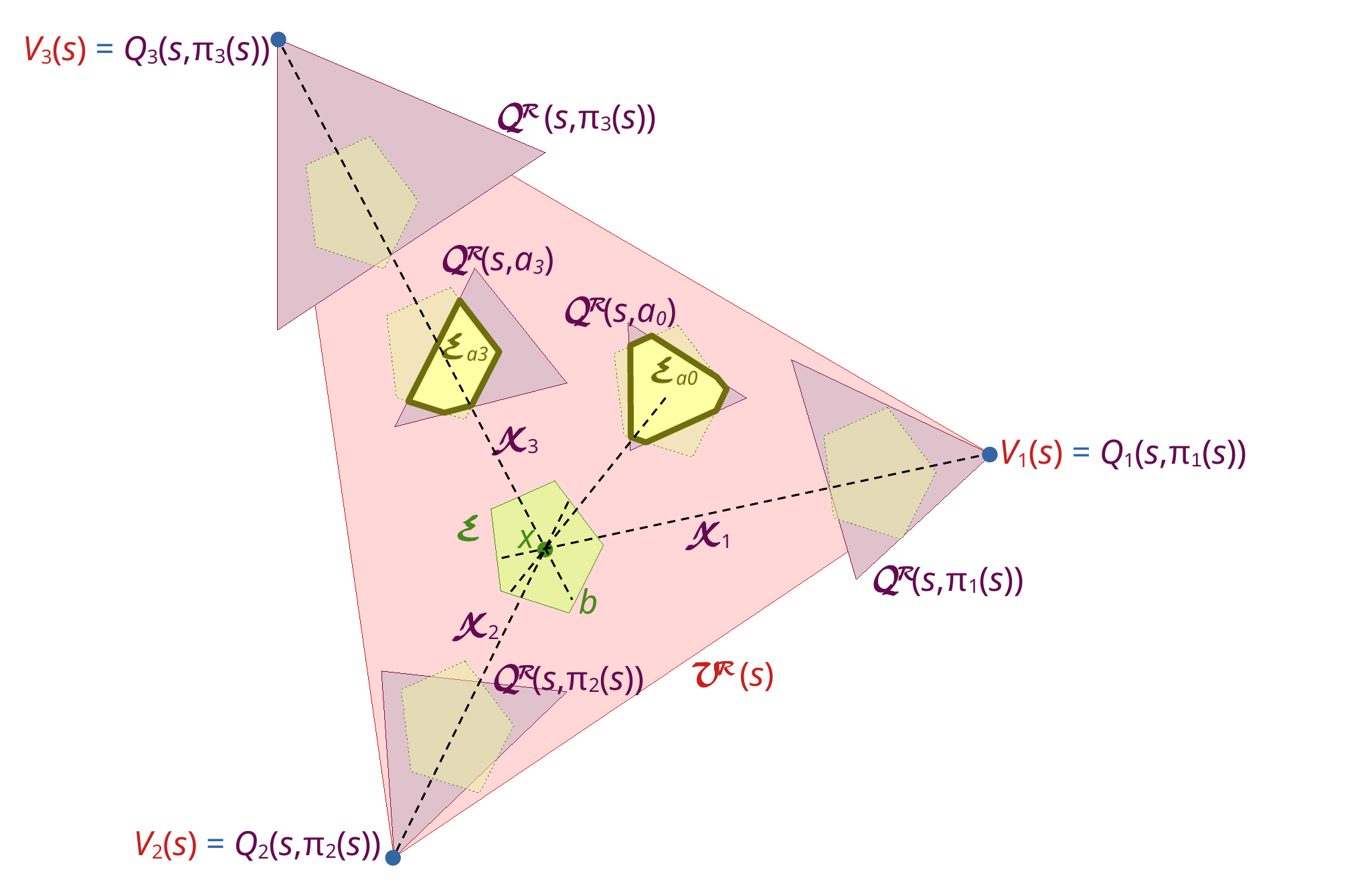}
\caption{Construction of action-aspirations $\Easp_a$ from state-aspiration $\Easp$ and reference simplices $\V^\R(s)$ and $\Q^\R(s,a)$ by shifting and clipping. See main text for details.} \label{figclip}
\end{figure}

\subsection{Advantages and disadvantages of clipping}
Clipping instead of shrinking has the advantage that it produces strictly larger propagated aspiration sets,
which might be desirable as discussed in section 3 of the main text.

However, clipping changes the shape of aspiration sets,
adding up to $d+1$ defining hyperplanes at each step,
and requiring recomputation of the set of vertices which may become combinatorially large.
Therefore, we expect the complexity of the clipping variant to be worse than shrinking,
though we have not studied it formally.

\section{Alternate version of action selection/local policy computation}
Some action selection criteria which we may wish to use are ``farsighted''
and require knowing the future behavior of the policy as well.
Algorithm 2 of the main text does not provide this information,
as it samples candidate actions $a_i$ first before determining the probabilities with which to mix them.
The algorithm \ref{algpi} presented here is an adaptation which does provide the full local policy
before sampling from it to choose the action taken.
It does so by replacing the sampling in lines 4 and 7 of the original algorithm
by a computation of the respective probabilities.
Instead of solving the linear program in line 6 of main-text algorithm 2
for each of the at most $|\A|^{d+2}$ many candidate action combinations $(a_0,\dots,a_{d+1})$,
line \ref{line:singlelp} uses a single linear program invocation manipulating
the average action-aspirations for each direction, $\E_{a_i\sim\pi_i}\Easp_{a_i}$.
This ensures complexity remains linear in $|\A|$.
A solution to this linear program exists since $\E_{a_i\sim\pi_i}\Easp_{a_i} \subseteq \Easp_s + \E_{a_i\sim\pi_i} z_{a_i}$ and $\E_{a_i\sim\pi_i} z_{a_i}$ is a positive multiple of $V^{\pi_i}(s)-x$; the proof of Lemma 2 of the main text is readily adapted.
Finally, the loop in lines \ref{line:loopstart} to \ref{line:loopend}
is necessary as one same action may lie in two distinct directional action sets
$\A_i \neq \A_j$.
 
\begin{algorithm*}[h]
\footnotesize
\caption{Local policy computation (``clip'' and ``shrink'' variants)\label{algpi}}
\begin{algorithmic}[1]
    \Require Reference simplex vertices $V^{\pi_i}(s)$, $Q^{\pi_i}(s,a)$; state $s$; state-aspiration $\Easp_s$.
    \State $x \gets C(\Easp_s)$, $\Easp' \gets \Easp_s - x$ \Comment{center, shape}
    \For{$i = 0, \dots, d+1$}
        \State $\A_i \gets$ DirectionalActionSet$(i)$
        \State $\pi_i \gets$ CandidateActionDistribution$(\A_i)$ \Comment{any probability distribution on $\A_i$}
        \For{$a_i\in\A_i$}
            \State $\Easp_{a_i} \gets$ ClipAspiration$(a_i,i)$ or ShrinkAspiration$(a_i,i)$ \Comment{action-aspiration}
        \EndFor
    \EndFor
    \State solve linear program $p_0=\max$!, $\sum_{i=0}^{d+1} p_i \E_{a_i\sim\pi_i} \Easp_{a_i} \subseteq \Easp_s$ for $p\in\Delta(\{0,\dots,d+1\})$
    \label{line:singlelp}
    \State initialize $\pi\equiv 0$
    \label{line:loopstart}
    \For{$i = 0, \dots, d+1$} \Comment{collect probabilities across directions}
        \For{$a_i\in\A_i$}
            \State $\pi(a_i,\Easp_{a_i}) \gets \pi(a_i,\Easp_{a_i}) + p_i \pi_i(a_i)$
        \EndFor
    \EndFor
    \label{line:loopend}
    \Return $\pi$ \Comment{local policy}
\end{algorithmic}
\end{algorithm*}

\section{Video of an example run}

In the video accessible at the anonymous download link\\
{\small\url{https://mega.nz/file/gUgTzZqB#Co21aSoKQAo7PjeR9_kzxzquHdjlNav0qjp9wKX3NRU},}\\ we illustrate the propagation of aspirations from states to actions in one episode of a simple gridworld environment.

In that environment, the agent starts at the central position (3,3) in a 5-by-5 rectangular grid, can move up, down, left, right, or pass in each of 10 time steps, and gets a Delta $f(s,a,s')=g(s')$ that only depends on its next position $s'$ on the grid. 
The values $g(s')$ are iid 2-d standard normal random values.

In each time-step, the black dashed triangle is the current state's reference simplex $\V^\R(s)$,
the blue triangles are the five candidate actions' reference simplices $\Q^\R(s,a)$,
the red square is the state-aspiration $\Easp$,
the dotted lines connect its center, the red dot, with the vertices of $\V^\R(s)$ or with the centers of the sets $\Q^\R(s,a)$, and the green squares are the resulting action-aspirations $\Easp_a$.

The coordinate system is ``moving along'' with the received Delta, so that aspirations of consecutive steps can be compared more easily. In other words, the received Delta $R_t = \sum_{t'=0}^{t-1}f(s_t,a_t,s_{t+1})$ is added to all vertices so that, e.g., the vertices of the back dashed triangle are shown at positions $R_t+V^{\pi_i}(s_t)$ rather than $V^{\pi_i}(s_t)$.

This run uses the linear volume shrinking schedule $r_{\max}(s)=\big(1-1/T(s)\big)^{1/d}$.
As one can see, the state-aspirations indeed shrink smoothly towards a final point aspiration, which spends the initial amount of ``freedom'' rather evenly across states. This way, the agent manages to avoid drift in this case, so that the eventual point aspiration is still inside the initial aspiration set.

\section{Numerical evidence for reference policy selection complexity}

In order to study the number of trials needed to find the $d+1$ directions that define suitable reference policies for the definition of reference simplicies, we performed numerical experiments with binary-tree-shaped environments of the following type.

Each environment has 10 time steps, each state has two actions, each action can lead to two different successor states, leading to $4^{10}$ many terminal states. Deltas $f(s,a,s')$ are iid uniform variates in $[0,1]^d$. The initial aspiration is $(5,\dots,5)$.

For each $d\in\{1,\dots,8\}$, we perform 1000 independent simulations and find that the average number $k$ of trials until $\Easp_0\in\conv\{v_1,\dots,v_k\}$ is below $2d+1$ for all tested $d$.